\documentclass[letterpaper, 10 pt, conference]{ieeeconf}  

\usepackage{amsmath}
\usepackage{cite}
\usepackage{amssymb}
\usepackage{subfigure}
\usepackage{float}
\usepackage{pifont}
\usepackage[labelfont=it,textfont=it]{caption}
\usepackage{color}

\newtheorem{thm}{Theorem}
\newtheorem{lem}[thm]{Lemma}

\newtheorem{defn}{Definition}

\newtheorem{rem}{Remark}

\usepackage{graphicx}
\usepackage{subcaption}

\IEEEoverridecommandlockouts                              

\title{\LARGE \bf
Lateral String Stability for Vehicle Platoons
}

\author{Sixu Li$^{1}$, Swaroop Darbha$^{1,*}$, and Yang Zhou$^{1,*}$
\thanks{$^{1}$Sixu Li, Swaroop Darbha, and Yang Zhou are with
        Texas A\&M University, College Station, TX, 77483, USA
       {\tt\small \{sixuli, dswaroop, yangzhou295\}@tamu.edu}}%
\thanks{$^{*}$Corresponding authors: Swaroop Darbha, Yang Zhou}%
}

\begin{document}

\maketitle
\thispagestyle{empty}
\pagestyle{empty}

\begin{abstract}
Connected and automated vehicle (CAV) platooning promises gains in energy efficiency and traffic throughput and, most critically, in safety. These safety benefits hinge on string stability, which determines how disturbances propagate along a platoon. While longitudinal string stability is well studied, lateral string stability, which governs the propagation of path-tracking errors that can lead to unsafe deviations from the intended path, remains underexplored. Its importance is increasing as autonomous vehicles rely more heavily on onboard sensing and map‑free navigation, where sensor occlusion and dense formations amplify safety risks.
This paper presents a new framework for lateral string stability that directly addresses safety‑critical path‑relative tracking errors and enables consistent comparison across vehicles following the same road geometry. Central to this framework is an arc‑length (Eulerian) viewpoint, a departure from traditional analyses, that clarifies how tracking errors at a given point on the path propagate from one vehicle to the next. A formal definition of lateral string stability is introduced along with two control strategies: an onboard‑sensing‑only controller and a novel learn‑from‑predecessor approach utilizing vehicle-to-vehicle (V2V) communication. We show that onboard sensing alone cannot guarantee attenuation of path-tracking errors, imposing a fundamental safety limitation, whereas V2V communication enables true error attenuation.



\end{abstract}

\section{INTRODUCTION}

Disturbance propagation in traffic streams is a key factor for stability, safety, and efficiency. This phenomenon is captured by \textit{string stability}, which describes whether disturbances attenuate along a platoon. The concept originated in early decentralized control work by Chu \cite{chu1974decentralized}, and was later formalized by Darbha and Hedrick \cite{swaroop1994string,swaroop2002string} to analyze spacing error propagation in automated vehicle (AV) platoons under the California PATH program. Since then, longitudinal string stability has been extensively studied across control architectures and information flow topologies \cite{swaroop1999constant,seiler2004disturbance}. Applications were broadened in reference \cite{tian2025physically,li2024sequencing,li2025nonlinear}, and theoretical generalizations were developed in reference \cite{besselink2017string}.

In contrast, \textit{lateral string stability}, which governs the propagation of steering-induced tracking errors, has received limited attention, even though it is critical for safe AV platooning. With AVs shifting from infrastructure-based guidance (e.g., magnetized lanes; \cite{fenton1976steering}) to onboard sensing and \textit{map-free navigation} \cite{ort2018autonomous}, maintaining precise lateral positioning becomes increasingly difficult. In dense platoons, \textit{sensor occlusion} \cite{zhang2021safe} can further hinder perception, reducing a follower’s ability to detect obstacles, lane boundaries, and dynamic hazards. One remedy is to let the lead vehicle plan a local path for the entire platoon \cite{hassanain2020string,liu2020robust}, with followers either tracking their predecessor’s path or receiving shared data via vehicle-to-vehicle (V2V) communication. Such schemes, however, couple lateral dynamics across vehicles and thus motivate a formal study of lateral string stability.

Several recent studies have begun to address this gap. Alleleijn et al. \cite{alleleijn2014lateral} introduced the idea of lateral string stability but without formal definitions or analysis. Hassanain et al. \cite{hassanain2020string} characterized it through $\mathcal{H}_\infty$ norms of transfer functions, while Liu et al. \cite{liu2020robust} employed $\mathcal{L}_\infty$ attenuation of tracking error vectors and showed that predecessor-only information is insufficient. Somisetty and Darbha \cite{somisetty2024lateral} instead defined lateral string stability as bounded tracking errors and highlighted the benefits of leader information sharing. Nonetheless, these works share important limitations:
\begin{itemize}
    \item Prior analyses focus on specific signals (e.g., yaw rate or predecessor-relative errors) that do not directly constrain path-relative tracking errors, which are safety-critical.
    \item Prior work often assumes straight-line paths and does not systematically explore the influence of control strategies or information topologies.
\end{itemize}

A key difficulty is that lateral safety is inherently spatial: because the road geometry and physical boundaries (e.g., lane edges and curbs) vary along the path, what matters is whether tracking errors attenuate as vehicles pass the same location on the road. This corresponds to a position-based (Eulerian) viewpoint. However, vehicles traverse the path asynchronously, so comparing errors at the same time instant typically compares different locations and curvatures, mixing controller performance with geometric difficulty. This motivates reparameterizing the dynamics by path arc length, expressing tracking errors (and control inputs) as functions of distance along the path. Fig.~\ref{fig: reparameterization} illustrates the difference between comparing lateral (cross-track) errors $e_{lat}$ at a common time $t_0$ versus a common arc length $l_0$, using a two-vehicle platoon as an example.
\begin{figure}[h]
\vspace{-0.5em}
    \centering
    \subfigure[time-based]{\includegraphics[width=0.42\textwidth]{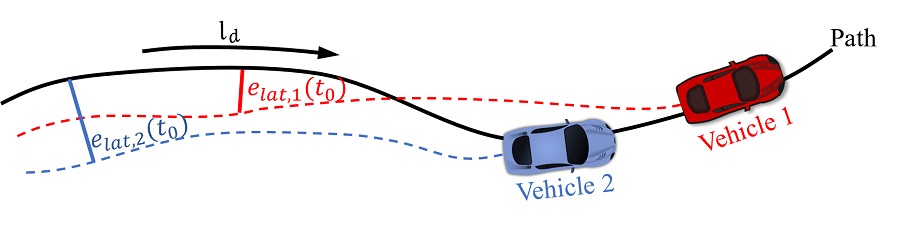}}
    \subfigure[arc-length-based]{\includegraphics[width=0.45\textwidth]{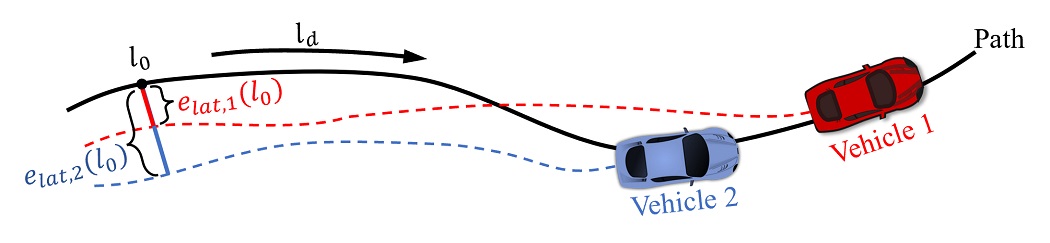}}
    \caption{Tracking error parameterizations}
    \label{fig: reparameterization}
    \vspace{-0.5em}
\end{figure}

This paper addresses the identified gaps through the following contributions:
\begin{enumerate}
    \item A formulation of the lateral control problem for vehicle platoons using arc-length reparameterization. 
    \item A novel \textit{learn-from-predecessor} control strategy that improves lateral string stability.
    \item A general necessary and sufficient condition for  $\mathcal{L}_2$ lateral string stability.
    \item A comprehensive analysis of different combinations of information acquisition modes, control strategies, and error measures, including impossibility results.
\end{enumerate}

The remainder of the paper is organized as follows. Section~\ref{sec2} introduces the platoon lateral control problem. Section~\ref{sec4} defines  $\mathcal{L}_2$ lateral string stability, states the main results, and summarizes the analysis. Section~\ref{sec6} validates the findings through simulation. Section~\ref{sec7} concludes the paper.

\section{Problem Formulation}
\label{sec2}

\subsection{Vehicle Platoon Lateral Control Problem}
We consider a finite homogeneous platoon of $m$ vehicles, each tasked with tracking a desired path by controlling its steering angle, as shown in Fig. \ref{fig: platoon_setting}. The availability of the desired path information may vary across vehicles.
\begin{figure}[htb!]
\vspace{-0.5em}
    \centering
    \includegraphics[width=0.9\linewidth]{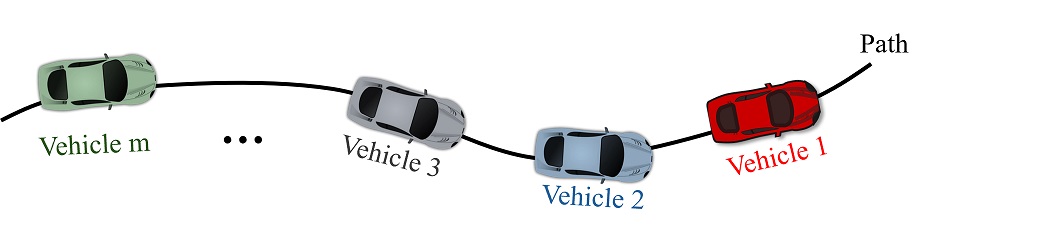}
    \caption{Vehicle platoon lateral control problem}
    \label{fig: platoon_setting}
    \vspace{-0.5em}
\end{figure}

The dynamics of each vehicle in the platoon are described by the tracking error dynamics model \cite{liu2020lateral,peng1990vehicle}. 
As shown in Fig.~\ref{fig: representation_heading_position_errors}, let $(X_v, Y_v)$ denote the vehicle's center of gravity (C.G.), and $(X_0, Y_0)$ its orthogonal projection onto the desired path. Define the lateral error $e_{lat}$ and heading error $\tilde{\theta} := \theta - \theta_R$, where $\theta$ denotes the vehicle's heading angle and $\theta_R$ the desired heading angle at $(X_0, Y_0)$. From kinematics:
\begin{equation}
\label{eq: d_theta_R}
\dot{\theta}_R = \frac{v_x}{R} = v_x \kappa,
\end{equation}
where $v_x$ denotes the longitudinal speed, $R$ is the instantaneous turning radius, and $\kappa = \frac{1}{R}$ is the path curvature.

\begin{figure}[htb!]
\vspace{-0.5em}
    \centering
    \includegraphics[width=0.4\linewidth]{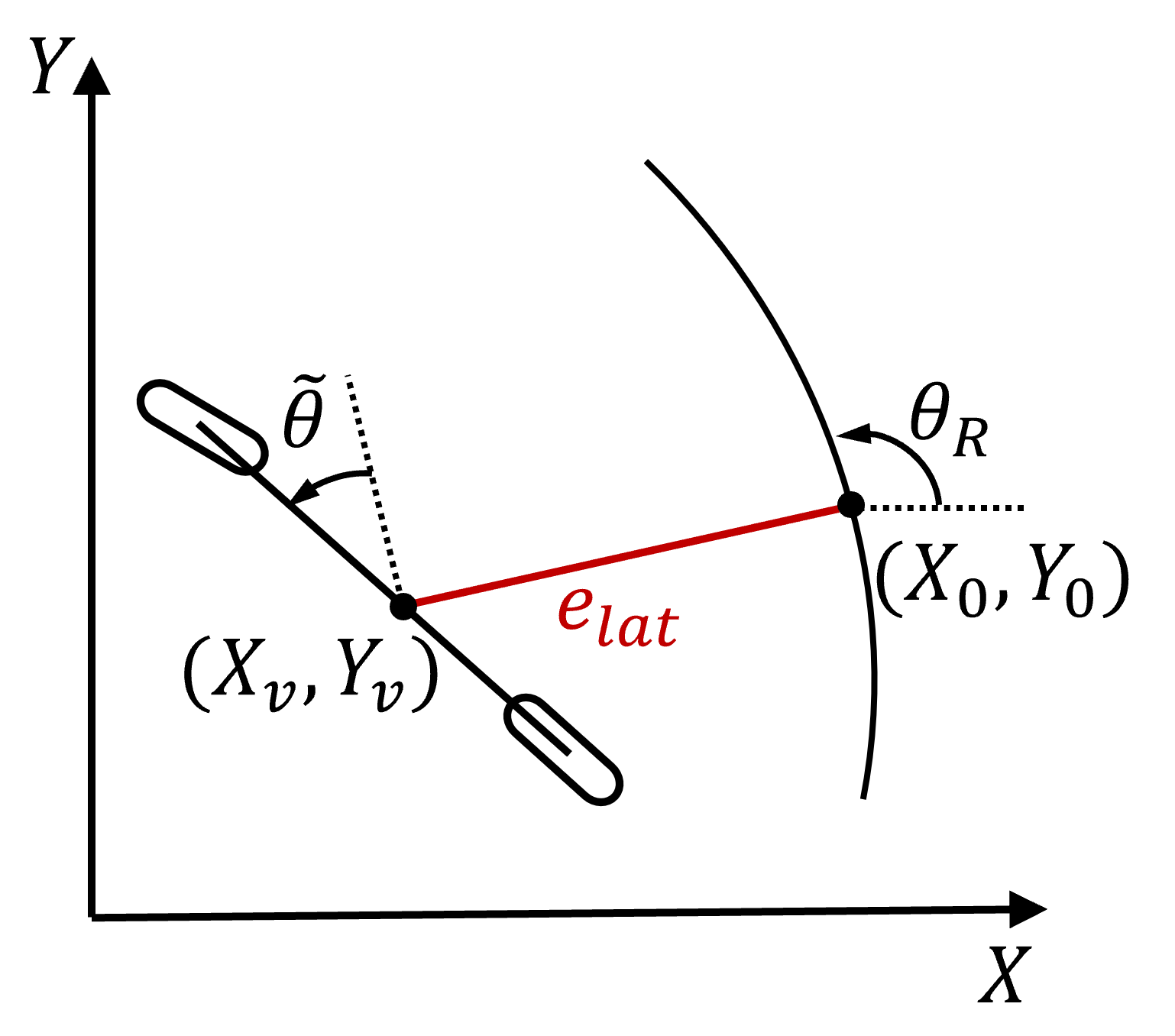}
    \caption{Heading and position error representation}
    \label{fig: representation_heading_position_errors}
    \vspace{-0.5em}
\end{figure}

Under standard assumptions \cite{liu2020lateral}—including constant longitudinal speed, small heading errors, negligible $\dot{R}$, large turning radius relative to $e_{lat}$, and omission of higher-order terms—the error dynamics of the $i^{\text{th}}$ vehicle are derived. Here, the superscript ``$des$'' denotes variables associated with the desired path, or, for error terms, quantities measured with respect to the desired path:
\begin{align}
\label{eq: governing}
\mathbf{M} \ddot{\mathbf{e}}_i^{des}(t) + \mathbf{C} \dot{\mathbf{e}}_i^{des}(t) + \mathbf{L} \mathbf{e}_i^{des}(t)&=\mathbf{B} u_i(t) - \tfrac{1}{v_x}\mathbf{F} \dot\theta^{des}(t), \notag\\ 
&~~\forall i=1,2,\ldots,m,
\end{align}
\[\text{where }
\mathbf{e}_i := \begin{bmatrix} e_{lat,i} \\ \tilde{ \theta}_i \end{bmatrix}, \quad
u_i := \delta_{f,i}, \quad
\mathbf{M} := \begin{bmatrix} m & 0 \\ 0 & I_z \end{bmatrix}, \quad \]
\vspace{-0.5em}
\[
\mathbf{C} := \begin{bmatrix} \frac{C_f + C_r}{v_x} & \frac{a C_f - b C_r}{v_x} \\ \frac{a C_f - b C_r}{v_x} & \frac{a^2 C_f + b^2 C_r}{v_x} \end{bmatrix}, \mathbf{L} := \begin{bmatrix} 0 & -(C_f + C_r) \\ 0 & -(a C_f - b C_r) \end{bmatrix}, 
\]
\vspace{-1em}
\begin{align*}
\mathbf{B} := \begin{bmatrix} C_f \\ a C_f \end{bmatrix}, \quad
\mathbf{F} := \begin{bmatrix} mv_x^2 + a C_f - b C_r \\ a^2 C_f + b^2 C_r. \end{bmatrix}.
\end{align*}
Here, $\delta_{f,i}$ denotes the front steering angle, and $a$, $b$ denote the distances from the C.G. to the front and rear axles, respectively. The parameters $C_f$ and $C_r$ are the cornering stiffnesses of the front and rear tires, respectively. The parameter $m$ is the vehicle mass and $I_z$ is the moment of inertia about the vertical axis.

To analyze lateral string stability, we must compare tracking errors across vehicles, which requires evaluating errors at the same spatial location along the path. To this end, we reparameterize the above model using the arc length of the desired path, denoted $l_d$. Note that we use the prime notation for arc-length derivatives (i.e., $x' = \tfrac{dx}{dl}$):
\begin{align}
\label{eq: i_th_governing_reparam}
&v_x^2 \mathbf{M} \left( \mathbf{e}_i^{des}(l_d) \right)'' + v_x \mathbf{C} \left( \mathbf{e}_i^{des}(l_d) \right)' + \mathbf{L} \mathbf{e}_i^{des}(l_d) \notag \\
=& \mathbf{B} u_i(l_d) - \mathbf{F} \left(\theta^{des}(l_d)\right)',~~\forall i=1,2,\ldots,m.
\end{align}

Fig. \ref{fig: reparameterization} illustrates the difference between time parameterization and arc-length parameterization.

\subsection{Information Acquisition Modes and Control Strategies}
We distinguish two modes of information acquisition for the vehicles: onboard sensing and V2V communication. Each mode entails a different control strategy, reflecting the distinct availability of information.

For the onboard sensing mode, the lead vehicle directly tracks the desired path, while each following vehicle records its predecessor's traveled path, represented by X, Y coordinates and heading angle, as shown in Fig. \ref{fig: information flow_predecessor}. For example, a following vehicle can estimate the predecessor’s relative position and heading from onboard sensors (e.g., stereo vision, LiDAR-based tracking, or camera–LiDAR fusion), and then transform them into global X, Y, and heading using the following vehicle's own GNSS-based pose estimate. We refer to this recorded path as the reference path, and each following vehicle tracks its respective reference path.
\vspace{-0.5em}
\begin{figure}[h]
    \centering
    {\includegraphics[width=0.4\textwidth]{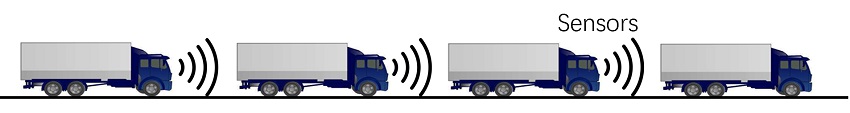}}
    \caption{Onboard sensing}
    \label{fig: information flow_predecessor}
\end{figure}
\vspace{-0.5em}

A feedback–feedforward (FF) strategy, which is widely used for single vehicle lateral control \cite{liu2020lateral}, is employed for this mode. We use superscript ``$ref$'' to denote variables associated with the reference path, or, for error terms, quantities measured with respect to the reference path. Under the assumption that each vehicle has small heading errors relative to its reference path, we obtain the following relations, with Fig. \ref{fig: ref_error_pred} presenting an exemplary illustration. Note that only $e_{lat,i}^{ref}$ is shown in the figure and $\tilde{\theta}_i^{ref}$ is omitted for clarity.
\begin{align}
    \mathbf{e}_i^{ref}(l_d)&\approx\mathbf{e}_i^{des}(l_d)- \mathbf{e}_{i-1}^{des}(l_d), \quad\text{for } i=2,3,\ldots,m. \label{eq: e_ref_relation} \\
    \theta^{ref}_i(l_d)&\approx\theta^{des}(l_d)+\tilde{\theta}^{des}_{i-1}(l_d) \notag \\
    &=\theta^{des}(l_d)+[0~ 1]\mathbf{e}_{i-1}^{des}(l_d),\quad\text{for }i=2,3,\ldots,m. \label{eq: theta_ref_relation}
\end{align}
\begin{figure}[h]
\vspace{-0.5em}
    \centering
    {\includegraphics[width=0.47\textwidth]{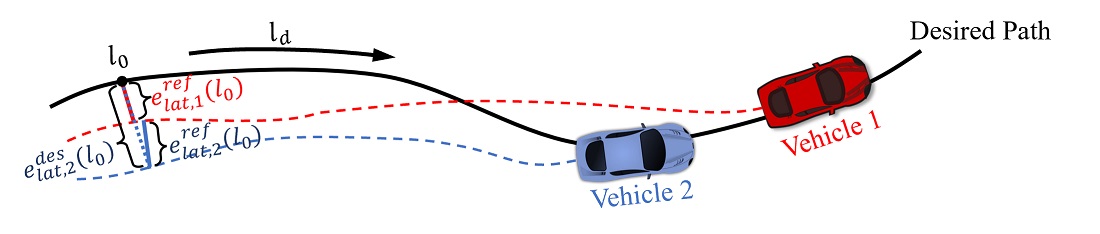}}
    \caption{Reference error computation}
    \label{fig: ref_error_pred}
    \vspace{-0.5em}
\end{figure}

Using the above relations, the arc-length reparameterized control law of the FF strategy is given as follows:
\begin{equation}
\label{eq: i_th_u_pred_track}
\small
u_i(l_d) =
\begin{cases}
\begin{aligned}
 &-\mathbf{K_P}\,\mathbf{e}_i^{des}(l_d)
 - v_x \mathbf{K_D} \big(\mathbf{e}_i^{des}(l_d)\big)' \\
 &~~ + k_{ff}\,\big(\theta^{des}(l_d)\big)' , \quad\quad\quad\quad\quad\quad\quad~ i=1,
\end{aligned}\\[5ex]
\begin{aligned}
 &-\mathbf{K_P} \left(\mathbf{e}_i^{des}(l_d)- \mathbf{e}_{i-1}^{des}(l_d)\right)  \\
 &~~ - v_x \mathbf{K_D} \left(\mathbf{e}_i^{des}(l_d)- \mathbf{e}_{i-1}^{des}(l_d) \right)'
  \\
 &~~+ k_{ff}  \left(\theta^{des}(l_d)+[0~ 1]\mathbf{e}_{i-1}^{des}(l_d)\right)',~i=2,\ldots,m,
\end{aligned}
\end{cases}
\end{equation}
where the feedback gain vectors $\mathbf{K_P} := [k_{e_{lat}} \quad k_{\tilde{\theta}}]$ and $\mathbf{K_D} := [k_{\dot{e}_{lat}} \quad k_{\dot{\tilde{\theta}}}]$, while $k_{ff}$ is a scalar feedforward gain. Note that the above equation is useful for analyzing the propagation of $\mathbf{e}_i^{des}$. In controller implementation, $\mathbf{e}_i^{ref}$ and $\theta_i^{ref}$ are used for feedback and feedforward computation.

For the V2V mode, each following vehicle communicates with its predecessor, as shown in Fig. \ref{fig: information flow_desired}.
\begin{figure}[h]
\vspace{-0.5em}
    \centering
    {\includegraphics[width=0.4\textwidth]{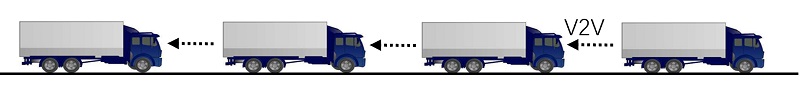}}
    \caption{V2V communication}
    \label{fig: information flow_desired}
    \vspace{-0.5em}
\end{figure}

We propose a novel learn-from-predecessor (LFP) strategy for the V2V mode to improve lateral string stability. Its implementation is given by:
\begin{align}
&u_i(l_d)= 
-\mathbf{K_P} \mathbf{e}_i^{des}(l_d) - v_x \mathbf{K_D} \left( \mathbf{e}_i^{des}(l_d) \right)'
+ u_{l,i}(l_d), \notag 
\end{align}
where
\begin{equation}
\label{eq: ith_veh_control_learn}
u_{l,i}(l_d) =
\begin{cases}
  k_{ff}\,\big(\theta^{des}(l_d)\big)', & i=1, \\[2ex]
  u_{l,i-1}(l_d)+\mathbf{K_{LP}}\,\mathbf{y}_{i-1}(l_d) 
  \\~+ \mathbf{K_{LD}}\,\mathbf{y}'_{i-1}(l_d), & i=2,\ldots,m.
\end{cases}
\end{equation}
Here, $u_{l,i}(l_d)$ denotes the learned term of the $i^\text{th}$ vehicle. Each following vehicle ($i= 2,\ldots,m$) updates its predecessor's learned term $u_{l,i-1}(l_d)$ with two additional terms that depend on its predecessor’s output $\mathbf{y}_{i-1}(l_d)$ and its derivative ${\mathbf{y}}'_{i-1}(l_d)$. The output is defined as $\mathbf{y}_{i-1}=\mathbf{C_{out}}\mathbf{e}_{i-1}^{des}$ where $\mathbf{C_{out}}$ is a selection matrix that extracts relevant components from the tracking error vector; this will be discussed in more detail later. The gains $\mathbf{K_{LP}}$ and $\mathbf{K_{LD}}$ represent the proportional and derivative learning gains, respectively.

With this strategy, each vehicle shares the desired path, its output $\mathbf{y}_{i-1}(l_d)$, and learned term $u_{l,i-1}$ with its follower $i$.
\begin{rem}
The LFP strategy is naturally robust to moderate communication delays since the spatial separation between vehicles already introduces a time gap. 
\end{rem}
\begin{rem}
  While the tracking-error model is derived using the orthogonal projection of the C.G. onto a path, in implementation we use the closest point on the path instead. For locally circular paths this coincides with the orthogonal projection, and it is standard practice for general paths.
\end{rem}

\section{Lateral string stability}
\label{sec4}

\subsection{Definition and Main Results}
\label{sec 4.1}
In this paper, we consider the following definition of lateral string stability.
\begin{defn}[$\mathcal{L}_2$ lateral string stability]
Consider the platoon system in \eqref{eq: i_th_governing_reparam}, if for some $\mathbf{C_{out}}$ and $\mathbf{y}_i=\mathbf{C_{out}}\mathbf{e}_i^{des}$, it holds that
\begin{equation}
\label{eq:  Lateral string stability condition}
\| \mathbf{y}_i(l_d)  \|_{\mathcal{L}_2} < \| \mathbf{y}_{i-1}(l_d)  \|_{\mathcal{L}_2}, \quad \forall i=2,3,\ldots, m,
\end{equation}
then the system is $\mathcal{L}_2$ lateral string stable with respect to the output $\mathbf{y}_i=\mathbf{C_{out}}\mathbf{e}_i^{des}$.
\end{defn}

\begin{rem}
\label{rem: output choices}
We consider two outputs for $\mathbf{y}_i$. The first is
\[
\mathbf{y}_i = \mathbf{e}_i^{des} =
\begin{bmatrix}
e_{lat,i}^{des} \\[2pt]
\tilde{\theta}_i^{des}
\end{bmatrix},
\qquad
\mathbf{C}_{out} = \mathbf{I}_2,
\]
which captures both lateral tracking errors.

The second uses only the lateral error:
\[
\mathbf{y}_i = e_{lat,i}^{des},
\qquad
\mathbf{C}_{out} = [\,1\;\;0\,],
\]
a choice motivated by its direct safety relevance as it quantifies lateral deviation from the desired path.
\end{rem}

For linear time-invariant (LTI) systems, frequency-domain analysis is particularly convenient. In particular, Parseval’s theorem relates the induced $\mathcal{L}_2$ norm with the system's frequency response \cite{bhattacharyya2018linear}. For such an analysis, we utilize the Laplace transform. Note that \eqref{eq: i_th_governing_reparam} captures the plant dynamics in response to the control input $u_i$, and its structure remains the same across different information acquisition modes and control strategies. For notational simplicity, we adopt lowercase symbols to denote Laplace-transformed variables; for instance, the Laplace transform of $z(l_d)$ is denoted by $z(s)$, where $s$ is the Laplace variable. Assuming zero initial conditions, the Laplace-domain representation of \eqref{eq: i_th_governing_reparam} is:
\begin{equation}
\label{model Laplace}
   (s^2v_x^2\mathbf{M} + sv_x\mathbf{C}  + \mathbf{L})\mathbf{e}^{des}_i(s) = \mathbf{B} {u_i}(s) - s\mathbf{F}\theta^{des}(s),
\end{equation}
the expressions for different $u_i(s)$ will be provided later.

We establish some general results that will simplify and unify several arguments to come.
We begin with a necessary and sufficient condition for  $\mathcal{L}_2$ lateral string stability when the vehicle interconnection can be expressed as input–output maps between $\mathbf{y}_{i-1}$ and $\mathbf{y}_i$, stated in the following theorem.

\begin{thm}
\label{prop: LFP DT frequency-domain condition}
Consider the platoon system in \eqref{eq: i_th_governing_reparam}, and suppose the interconnections between subsystems can be expressed in the Laplace domain as
\begin{equation}
\mathbf{y}_i(s) = \mathbf{H}_i(s)\,\mathbf{y}_{i-1}(s), \quad \forall i=2,3,\ldots,m,
\end{equation}
where $\mathbf{H}_i(s)$ denotes the transfer function from $\mathbf{y}_{i-1}(s)$ to $\mathbf{y}_i(s)$. The system is $\mathcal{L}_2$ lateral string stable with respect to the output $\mathbf{y}_i$ if and only if
\begin{equation}
\|\mathbf{H}_i(s)\|_{\mathcal{H}_\infty} := \sup_{\omega\in\mathbb{R}} \sigma_1\!\left(\mathbf{H}_i(j\omega)\right) < 1, \quad \forall i=2,3,\ldots,m.
\end{equation}
\end{thm}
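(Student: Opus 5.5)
The plan is to read the string-stability inequality \eqref{eq:  Lateral string stability condition} the way the paper implicitly intends: as a property that must hold for every admissible $\mathcal{L}_2$ input signal $\mathbf{y}_{i-1}$ (equivalently, for every desired path $\theta^{des}$, after excluding the trivial case $\mathbf{y}_{i-1}\equiv 0$). Under that reading, the theorem reduces to the standard identification of the induced $\mathcal{L}_2$ gain of a stable LTI map with the $\mathcal{H}_\infty$ norm of its transfer matrix. Throughout, I take $\mathbf{H}_i$ to be proper and stable, so that $\mathbf{H}_i(j\omega)$ is defined and the map is bounded on $\mathcal{L}_2$. The arc-length variable $l_d$ simply plays the role of time, so Parseval applies verbatim.

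\emph{Sufficiency.} Let $\gamma:=\|\mathbf{H}_i\|_{\mathcal{H}_\infty}<1$ and take $\mathbf{y}_{i-1}\in\mathcal{L}_2$ nonzero. By Parseval's theorem,
\[
\|\mathbf{y}_i\|_{\mathcal{L}_2}^2=\frac{1}{2\pi}\int_{-\infty}^{\infty}\|\mathbf{H}_i(j\omega)\mathbf{y}_{i-1}(j\omega)\|^2\,d\omega .
\]
Bounding the integrand pointwise by $\sigma_1(\mathbf{H}_i(j\omega))^2\|\mathbf{y}_{i-1}(j\omega)\|^2\le\gamma^2\|\mathbf{y}_{i-1}(j\omega)\|^2$ and applying Parseval once more gives $\|\mathbf{y}_i\|_{\mathcal{L}_2}\le\gamma\|\mathbf{y}_{i-1}\|_{\mathcal{L}_2}<\|\mathbf{y}_{i-1}\|_{\mathcal{L}_2}$. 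The strict inequality uses $\gamma<1$ together with $\mathbf{y}_{i-1}\neq0$.

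\emph{Necessity.} This is the step I expect to be the main obstacle. I argue by contraposition: suppose $\sup_\omega\sigma_1(\mathbf{H}_i(j\omega))\ge1$ for some $i$.
\begin{itemize}
\item If the supremum exceeds $1$, continuity of $\sigma_1(\mathbf{H}_i(j\omega))$ in $\omega$ yields a frequency $\omega_0$ and an interval around it on which $\sigma_1>1$.
\item Let $v$ be the top right singular vector of $\mathbf{H}_i(j\omega_0)$. Build an input whose spectrum is concentrated near $\pm\omega_0$ and aligned with $v$, for example a real signal $\mathbf{y}_{i-1}(l)=\mathrm{Re}\big(v\,e^{j\omega_0 l}\big)\phi(l)$ with a wide, smooth window $\phi$. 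The Parseval computation then gives $\|\mathbf{y}_i\|>\|\mathbf{y}_{i-1}\|$, a violation of the inequality.
\item If the supremum equals exactly $1$ and is attained at a finite $\omega_0$ (or approached as $\omega\to\infty$), the same construction only yields a ratio arbitrarily close to $1$. That is not yet a strict violation, so this borderline case needs separate handling.
\end{itemize}

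For the borderline case I would first try a spectral argument: with $\gamma=1$ one has $\|\mathbf{y}_i\|\le\|\mathbf{y}_{i-1}\|$ for all inputs. Now choose an input whose spectrum lies where $\sigma_1=1$, if that set has positive measure, which gives equality. Otherwise I would interpret the definition as requiring a uniform contraction, i.e. $\sup \|\mathbf{y}_i\|/\|\mathbf{y}_{i-1}\|<1$; the near-maximizing sequence then shows this supremum equals $1$. I expect the paper to adopt this standard induced-norm interpretation, so that necessity reads $\sup_{\mathbf{y}_{i-1}\neq0}\|\mathbf{y}_i\|/\|\mathbf{y}_{i-1}\|=\|\mathbf{H}_i\|_{\mathcal{H}_\infty}$. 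The final step is to cite this identity (e.g.\ from \cite{bhattacharyya2018linear}) and apply it for each $i=2,\ldots,m$ separately.

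A further subtlety is that the inputs $\mathbf{y}_{i-1}$ are not arbitrary but are generated by $\theta^{des}$ through the upstream dynamics. For necessity I would either argue that the reachable set of $\mathbf{y}_{i-1}$ is dense in the relevant frequency band, or state explicitly that the definition quantifies over all $\mathcal{L}_2$ inputs to the interconnection.
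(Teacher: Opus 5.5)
Your proposal is correct and follows the same route as the paper. The paper's proof is only the citation that the $\mathcal{H}_\infty$ norm of a causal, stable LTI map equals its induced $\mathcal{L}_2$ gain. Your Parseval bound for sufficiency, your windowed-sinusoid construction for necessity, and your reading of the definition as a uniform contraction over all $\mathcal{L}_2$ inputs to the interconnection simply unpack that citation and make its implicit interpretation explicit.

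Your worry about the borderline case $\sup_\omega\sigma_1=1$ is not a mere technicality. For the LFP strategy with $\mathbf{y}_i=e_{lat,i}^{des}$, subtracting consecutive closed-loop equations gives $H_i(s)=1+g(s)(\mathbf{K_{LP}}+s\mathbf{K_{LD}})$, with $g(s)=[1~0](s^2v_x^2\mathbf{M}+sv_x(\mathbf{C}+\mathbf{B}\mathbf{K_D})+\mathbf{L}+\mathbf{B}\mathbf{K_P})^{-1}\mathbf{B}$ of relative degree two. Hence $H_i(j\omega)\to 1$ as $\omega\to\infty$ and $\|H_i\|_{\mathcal{H}_\infty}\ge 1$ for every choice of gains. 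So the paper's positive LFP claim can hold only under the per-signal strict reading, and under that reading the ``only if'' direction fails: $H(s)=(s+\tfrac12)/(s+1)$ gives $\|\mathbf{y}_i\|_{\mathcal{L}_2}<\|\mathbf{y}_{i-1}\|_{\mathcal{L}_2}$ for every nonzero input, yet $\|H\|_{\mathcal{H}_\infty}=1$.
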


\begin{proof}
The proof follows from the known fact that for causal, stable LTI systems, the $\mathcal{H}_\infty$ norm equals the induced $\mathcal{L}_2$ norm \cite{bhattacharyya2018linear}.

\end{proof}
\begin{rem}
When $\mathbf{y}_i=e_{lat,i}^{des}$, $\mathbf{H}_i(s)$ reduces to a scalar transfer function $H_i(s)$ rather than a matrix. In this case, the $\mathcal{H}_\infty$ norm definition simplifies to the standard SISO form
\begin{equation}
\|H_i(s)\|_{\mathcal{H}_\infty} := \sup_{\omega \in \mathbb{R}} \left| H_i(j\omega) \right|.
\end{equation}
\end{rem}

 Next, we present a useful lemma that provides a perturbation bound on singular values. Note that $\mathbb{M}_{n,m}$ denotes the set of all complex $n\times m$ matrices.
\begin{lem}[Corollary 7.3.5 in \cite{horn2012matrix}]
\label{lem: singular value bound}
Let $\mathbf{P}, \mathbf{Q} \in \mathbb{M}_{n,m}$ and let $q = \min\{m, n\}$. Let 
$\sigma_1(\mathbf{P}) \geq \cdots \geq \sigma_q(\mathbf{P})$ and 
$\sigma_1(\mathbf{Q}) \geq \cdots \geq \sigma_q(\mathbf{Q})$ be the nonincreasingly ordered singular values of $\mathbf{P}$ and $\mathbf{Q}$, respectively. Then
\begin{equation}
|\sigma_j(\mathbf{P}) - \sigma_j(\mathbf{Q})| \leq \sigma_1\left(\mathbf{P} - \mathbf{Q}\right) ,~ \text{for each }j = 1, \ldots, q.
\end{equation}
\end{lem}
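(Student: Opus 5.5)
We plan to derive the bound from the Courant--Fischer (min--max) characterization of singular values, which reduces it to the triangle inequality for the Euclidean norm. For $\mathbf{P}\in\mathbb{M}_{n,m}$ and $j\le q$, the singular values satisfy
\begin{equation*}
\sigma_j(\mathbf{P}) \;=\; \min_{\substack{\mathcal{S}\subseteq\mathbb{C}^m\\ \dim\mathcal{S}=m-j+1}}\;\max_{\substack{x\in\mathcal{S}\\ \|x\|_2=1}} \|\mathbf{P}x\|_2 .
\end{equation*}
This follows from applying the Courant--Fischer theorem to the Hermitian positive semidefinite matrix $\mathbf{P}^*\mathbf{P}$. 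Its eigenvalues are $\sigma_1(\mathbf{P})^2\ge\cdots\ge\sigma_q(\mathbf{P})^2$, padded with zeros if $m>n$, and $x^*\mathbf{P}^*\mathbf{P}x=\|\mathbf{P}x\|_2^2$. Taking square roots is harmless because $t\mapsto\sqrt{t}$ is monotone on $[0,\infty)$.

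First we fix $j$ and take a subspace $\mathcal{S}_\star$ of dimension $m-j+1$ attaining the minimum in the characterization of $\sigma_j(\mathbf{Q})$. It can be taken to be the span of the right singular vectors of $\mathbf{Q}$ associated with $\sigma_j,\dots$, completed by a basis of the null space. For every unit $x\in\mathcal{S}_\star$, the triangle inequality together with $\|(\mathbf{P}-\mathbf{Q})x\|_2\le\sigma_1(\mathbf{P}-\mathbf{Q})$ (the spectral norm bound) gives
\begin{equation*}
\|\mathbf{P}x\|_2 \le \|\mathbf{Q}x\|_2 + \|(\mathbf{P}-\mathbf{Q})x\|_2 \le \|\mathbf{Q}x\|_2 + \sigma_1(\mathbf{P}-\mathbf{Q}).
\end{equation*}
Maximizing over unit $x\in\mathcal{S}_\star$ yields $\max_{x\in\mathcal{S}_\star}\|\mathbf{P}x\|_2\le\sigma_j(\mathbf{Q})+\sigma_1(\mathbf{P}-\mathbf{Q})$. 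Since $\sigma_j(\mathbf{P})$ is the minimum of such maxima over all admissible subspaces, we get $\sigma_j(\mathbf{P})\le\sigma_j(\mathbf{Q})+\sigma_1(\mathbf{P}-\mathbf{Q})$.

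Next we exchange the roles of $\mathbf{P}$ and $\mathbf{Q}$ and use $\sigma_1(\mathbf{Q}-\mathbf{P})=\sigma_1(\mathbf{P}-\mathbf{Q})$. This gives the reverse inequality, and the two together give $|\sigma_j(\mathbf{P})-\sigma_j(\mathbf{Q})|\le\sigma_1(\mathbf{P}-\mathbf{Q})$ for each $j=1,\dots,q$.

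The only step needing care is the min--max characterization itself, in particular the dimension bookkeeping when $m\neq n$. We would handle it by stating the Courant--Fischer theorem for $\mathbf{P}^*\mathbf{P}\in\mathbb{M}_m$ and checking that, for $j\le q$, its $j$-th largest eigenvalue is exactly $\sigma_j(\mathbf{P})^2$. If that bookkeeping becomes awkward, an alternative route is the Hermitian dilation $\begin{bmatrix}0&\mathbf{P}\\ \mathbf{P}^*&0\end{bmatrix}$. Its largest $q$ eigenvalues are $\sigma_1(\mathbf{P}),\dots,\sigma_q(\mathbf{P})$, so Weyl's eigenvalue perturbation inequality for Hermitian matrices gives the claim directly, because the dilation of $\mathbf{P}-\mathbf{Q}$ has spectral norm $\sigma_1(\mathbf{P}-\mathbf{Q})$.
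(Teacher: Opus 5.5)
Your proof is correct. It differs from the cited source only in your primary route; your fallback is the source's proof.

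The paper gives no proof of its own and imports the result from Horn and Johnson~\cite{horn2012matrix}. There, Corollary 7.3.5 is obtained from the Jordan--Wielandt matrix $\begin{bmatrix}0&\mathbf{P}\\ \mathbf{P}^*&0\end{bmatrix}$, whose eigenvalues are $\pm\sigma_1(\mathbf{P}),\dots,\pm\sigma_q(\mathbf{P})$ together with $n+m-2q$ zeros. Weyl's inequality for Hermitian matrices is then applied to the dilations of $\mathbf{P}$ and $\mathbf{Q}$; their difference is the dilation of $\mathbf{P}-\mathbf{Q}$, whose spectral norm is $\sigma_1(\mathbf{P}-\mathbf{Q})$.

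Your main route applies Courant--Fischer to $\mathbf{P}^*\mathbf{P}\in\mathbb{M}_m$ and then needs only the triangle inequality on one test subspace. The dimension bookkeeping you flag causes no trouble: for $j\le q$, the $j$-th largest eigenvalue of $\mathbf{P}^*\mathbf{P}$ is $\sigma_j(\mathbf{P})^2$ whether $m\le n$ or $m>n$. In the latter case the extra $m-n$ eigenvalues are zeros at the bottom.

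The two routes buy different things. The dilation proof gives economy and reach: once the spectrum of the dilation is known, Weyl-type eigenvalue inequalities carry over to singular values. For instance, it yields $\sigma_{i+j-1}(\mathbf{P}+\mathbf{Q})\le\sigma_i(\mathbf{P})+\sigma_j(\mathbf{Q})$ for $i+j-1\le q$. Your proof gives transparency: it shows directly that each $\sigma_j$ is a $1$-Lipschitz function of the matrix in the spectral norm. It also shows how little the paper actually needs in Theorem~\ref{prop: multi-error not stable general}, where the lemma is used for the second singular value with $n=m=2$, $\mathbf{P}=\mathbf{I}_2$ and $\mathbf{Q}=-\mathbf{R}(j\omega)$. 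Taking $\mathcal{S}_\star$ to be a line in $\ker\mathbf{R}(j\omega)$, your argument reduces to one observation: a unit vector $x$ with $\mathbf{R}(j\omega)x=0$ satisfies $\mathbf{H}_i(j\omega)x=x$, hence $\sigma_1(\mathbf{H}_i(j\omega))\ge 1$.
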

Using Lemma~\ref{lem: singular value bound}, we establish the following theorem, which is useful for showing that  $\mathcal{L}_2$ lateral string stability cannot be achieved with respect to the output $\mathbf{y}_i = \mathbf{e}_i^{\text{des}}$.
\begin{thm}
\label{prop: multi-error not stable general}
    Suppose there exists an $i \in \{2, \dots, m\}$ such that we have the input-output relationship
    \begin{equation}
\mathbf{e}_{i}^{des}(s) = \mathbf{H}_i(s)\, \mathbf{e}_{i-1}^{des}(s),
\end{equation}
where $\mathbf{H}_i(s) = \mathbf{I}_2 + \mathbf{R}(s)$, with $\mathbf{I}_2$ the $2 \times 2$ identity matrix and $\mathbf{R}(s)$ rank-deficient for all $s$. Then the platoon cannot be $\mathcal{L}_2$ lateral string stable with respect to the output $\mathbf{y}_i = \mathbf{e}_i^{des}$.
\end{thm}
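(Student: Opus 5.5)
The plan is to show that $\sigma_1(\mathbf{H}_i(j\omega)) \ge 1$ for every $\omega\in\mathbb{R}$. Then $\|\mathbf{H}_i(s)\|_{\mathcal{H}_\infty}\ge 1$, and Theorem~\ref{prop: LFP DT frequency-domain condition} immediately rules out $\mathcal{L}_2$ lateral string stability with respect to $\mathbf{y}_i=\mathbf{e}_i^{des}$.

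Fix $\omega$ and apply Lemma~\ref{lem: singular value bound} with $n=m=2$, $\mathbf{P}=\mathbf{H}_i(j\omega)=\mathbf{I}_2+\mathbf{R}(j\omega)$, $\mathbf{Q}=\mathbf{R}(j\omega)$, and $j=2$. Since $\mathbf{R}(j\omega)$ is a rank-deficient $2\times 2$ matrix, $\sigma_2(\mathbf{Q})=0$. Also $\sigma_1(\mathbf{P}-\mathbf{Q})=\sigma_1(\mathbf{I}_2)=1$. The lemma therefore gives
\[
\sigma_2(\mathbf{H}_i(j\omega)) = |\sigma_2(\mathbf{P})-\sigma_2(\mathbf{Q})| \le 1 .
\]
This is an upper bound, so it points the wrong way, and it only confirms that this choice of $\mathbf{P},\mathbf{Q}$ is not the decisive step.

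I would obtain the needed lower bound directly. Pick a unit vector $\mathbf{v}$ in the kernel of $\mathbf{R}(j\omega)$, which exists by rank deficiency. Then $\mathbf{H}_i(j\omega)\mathbf{v}=\mathbf{v}$, so
\[
\sigma_1(\mathbf{H}_i(j\omega))=\max_{\|\mathbf{x}\|=1}\|\mathbf{H}_i(j\omega)\mathbf{x}\|\ge 1 .
\]
If I want to stay within the paper's lemma, I can redo it with $\mathbf{P}=\mathbf{R}(j\omega)$, $\mathbf{Q}=-\mathbf{I}_2$ and $j=2$. This gives $|0-1|\le\sigma_1(\mathbf{R}(j\omega)+\mathbf{I}_2)$, i.e. $\sigma_1(\mathbf{H}_i(j\omega))\ge 1$, which is the same conclusion.

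Taking the supremum over $\omega$ yields $\|\mathbf{H}_i\|_{\mathcal{H}_\infty}\ge 1$, violating the strict inequality required by Theorem~\ref{prop: LFP DT frequency-domain condition}. If $\mathbf{H}_i$ is unstable, the $\mathcal{H}_\infty$ norm is infinite and string stability fails as well.

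The main obstacle is not the singular-value bound but the bookkeeping. The rank-deficiency hypothesis holds pointwise in $s$ and must be used at $s=j\omega$. Lemma~\ref{lem: singular value bound} has to be oriented so that it produces a lower bound on $\sigma_1(\mathbf{H}_i)$ rather than an upper bound on $\sigma_2$. Finally, the equivalence in Theorem~\ref{prop: LFP DT frequency-domain condition} must actually apply to the index $i$ in question. If the strict supremum is subtle, the kernel-vector argument also yields explicit inputs: a narrowband $\mathbf{e}_{i-1}^{des}$ concentrated near $\omega$ along $\mathbf{v}$ gives $\|\mathbf{e}_i^{des}\|_{\mathcal{L}_2}$ arbitrarily close to $\|\mathbf{e}_{i-1}^{des}\|_{\mathcal{L}_2}$. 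The lower bound is therefore attained, so strict attenuation cannot hold uniformly.
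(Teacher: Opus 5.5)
Your argument is correct and is essentially the paper's proof. The paper applies Lemma~\ref{lem: singular value bound} with $j=2$, $\mathbf{P}=\mathbf{I}_2$, $\mathbf{Q}=-\mathbf{R}(j\omega)$ (your second application with the roles of $\mathbf{P}$ and $\mathbf{Q}$ swapped) to get $\sigma_1(\mathbf{H}_i(j\omega))\ge 1$ for all $\omega$, then takes the supremum and invokes Theorem~\ref{prop: LFP DT frequency-domain condition}; your kernel-vector observation $\mathbf{H}_i(j\omega)\mathbf{v}=\mathbf{v}$ is a more elementary route to the same pointwise bound that also exhibits the unattenuated input direction explicitly.
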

\begin{proof}
    Substituting $s=j\omega$, we obtain the frequency-domain representation of the input-output relationship:
\begin{align}
\mathbf{e}^{des}_{i}(j\omega) = \mathbf{H}_i(j\omega)\mathbf{e}^{des}_{i-1}(j\omega).
\end{align}
Since $\mathbf{R}(s)$ is rank-deficient for all $s$, we have
\begin{equation}
    \sigma_2\left(\mathbf{R}(j\omega)\right)=0,~~~~\forall\omega\in\mathbb{R}.
\end{equation}
From Lemma \ref{lem: singular value bound}, we have
\begin{align}
    \sigma_1\left(\mathbf{H}_i(j\omega)\right)\geq\left|\sigma_2\left(\mathbf{I}_2\right)-\sigma_2\left(-\mathbf{R}(j\omega)\right)\right|=1, ~~~~\forall\omega\in\mathbb{R}.
\end{align}
Therefore, the $\mathcal{H}_\infty$ norm of $\mathbf{H}_i(s)$ satisfies 
\begin{equation}
    \|\mathbf{H}_i(s)\|_{\mathcal{H}_\infty}:=\sup_{\omega\in\mathbb{R}}\sigma_1\left(\mathbf{H}_i(j\omega)\right)\geq1.
\end{equation}
Using Theorem \ref{prop: LFP DT frequency-domain condition}, the proof is complete.
\end{proof}

\subsection{Summary of Analysis Results}
\label{sec5.1}
In this subsection, we summarize the results of the detailed analysis and provide proof sketches due to space limitations.

Recall that for the FF strategy with onboard sensing, the control law is given in \eqref{eq: i_th_u_pred_track}. Applying the Laplace transform to both sides of it, we obtain 
\begin{align}
\label{eq: u Laplace FF PT}
u_i(s) =
\begin{cases}
-(\mathbf{K_P} + s v_x \mathbf{K_D})\, \mathbf{e}_i^{des}(s) + s k_{ff} \theta^{des}(s),  \\\qquad\qquad\qquad\qquad\qquad\quad\text{for } i = 1, \\ \\
-(\mathbf{K_P} + s v_x \mathbf{K_D}) 
\left( \mathbf{e}_i^{des}(s) - \mathbf{e}_{i-1}^{des}(s) \right)
\\~+ s k_{ff} \left( \theta^{des}(s) + [0~~1] \mathbf{e}_{i-1}^{des}(s) \right),  \\\qquad\qquad\qquad\qquad\qquad\quad\text{for } i = 2,3,\ldots,m.
\end{cases} 
\end{align}

For the LFP strategy with V2V, the control law is given in \eqref{eq: ith_veh_control_learn}. Applying the Laplace transform to both sides of it,   
\begin{align}
&u_i(s)= 
-(\mathbf{K_P} + sv_x \mathbf{K_D}) \mathbf{e}_i^{des}(s)
+ u_{l,i}(s), \notag 
\end{align}
where
\begin{align}
\label{LFP V2V}
&u_{l,i}(s)=\begin{cases}
     sk_{ff} \theta^{des}(s)&\text{}i=1, \\ 
     \\
u_{l,i-1}(s)\\
~+(\mathbf{K_{LP}}+s\mathbf{K_{LD}})\mathbf{y}_{i-1}(s)&\text{}i=2,\ldots,m. 
\end{cases}
\end{align}

We can obtain the following impossibility results, that is, the following \textit{cannot} achieve  $\mathcal{L}_2$ lateral string stability:
\begin{enumerate}
  \item FF strategy with onboard sensing, w.r.t. $\mathbf{y}_i=\mathbf{e}_i^{des}$.
  \item FF strategy with onboard sensing, w.r.t. $\mathbf{y}_i=e_{lat,i}^{des}$.
  \item LFP strategy with V2V, w.r.t. $\mathbf{y}_i=\mathbf{e}_i^{des}$.
\end{enumerate}
In the above results, case 1) and case 3) follow by substituting 
\eqref{eq: u Laplace FF PT} and \eqref{LFP V2V}, respectively, into 
\eqref{model Laplace}, rearranging, and applying Theorem~\ref{prop: multi-error not stable general}. 
Case 2) is shown by a counterexample: when the lead vehicle’s lateral error is identically zero, 
 a nonzero steady-state heading error arises 
which in turn implies that the follower’s lateral error cannot remain zero.

In fact, the only case achieving  $\mathcal{L}_2$ lateral string stability 
is the LFP strategy with V2V and the output choice $\mathbf{y}_i=e_{lat,i}^{des}$. 
This underscores the role of V2V communication in enhancing lateral string stability.

\section{Numerical simulations}
\label{sec6}
To validate the proposed theoretical findings, we perform numerical simulation tests using parameters based on real-world data. A platoon of 6 vehicles is considered, with the dynamics of each vehicle described by a bicycle model. The parameters of the bicycle model are identified from a Lincoln MKZ \cite{liu2020lateral}, as listed in Table~\ref{table: parameters}. The platoon operates on a test track at $v_x=10m/s$ with the goal of tracking a desired path. The test track and corresponding desired path are shown in Fig.~\ref{fig: track}, where the grey area denotes the track, the blue line indicates the desired path, and the blue box marks the location where $l_d=0$. The desired path consists of four lane changes over the course of a full circuit, with absolute turning radii ranging from $7.4m$ to $2.9\times 10^4m$, thereby capturing road characteristics of both highway and urban scenarios.
\begin{table}[ht]
\vspace{-0.5em}
\centering
\footnotesize
\caption{Parameters of the Lincoln MKZ}
\begin{tabular}{cccc}
\hline
\hline
\textbf{Parameter} & \textbf{Symbol} & \textbf{Unit} & \textbf{Value}  \\
\hline
Mass & $m$ & kg & 1896 \\
Moment of Inertia & $I_z$ & kg$\cdot$m$^2$ & 3803  \\
Front Cornering Stiffness & $C_{f}$ & N/rad & 400000  \\
Rear Cornering Stiffness & $C_{r}$ & N/rad & 381900  \\
Distance from C.G. to front axle & $a$ & m & 1.2682   \\
Distance from C.G. to rear axle & $b$ &m & 1.5818   \\
\hline
\hline
\label{table: parameters}
\vspace{-0.5em}
\end{tabular}
\end{table}
\begin{figure}[htb!]
\vspace{-0.5em}
    \centering
    \includegraphics[width = 0.6\linewidth]{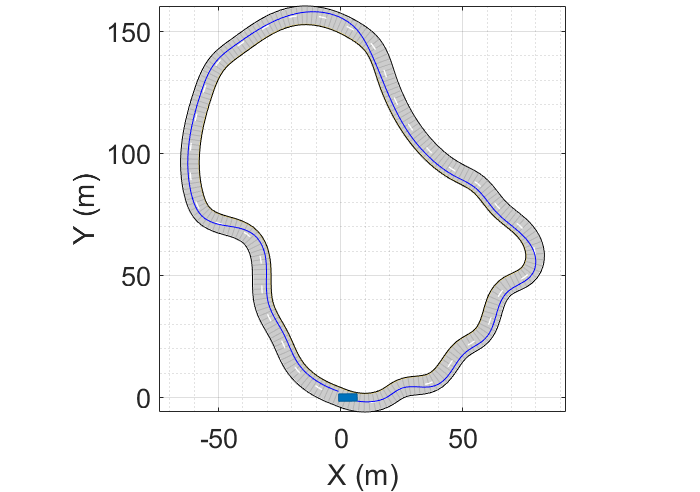}
    \caption{Test track and desired path}
    \label{fig: track}
    \vspace{-0.5em}
\end{figure}

Using the setup described above, we validate the proposed theoretical findings. Specifically, in Subsection \ref{sec 6.1}, we design the LFP strategy with V2V, and validate that it is $\mathcal{L}_2$ lateral string stable with respect to the output $\mathbf{y}_i=e_{lat,i}^{des}$. In Subsection \ref{sec 6.3}, we modify the LFP strategy to the FF strategy with only onboard sensing, and validate its instability results.

\subsection{LFP Strategy with V2V}
\label{sec 6.1}
For the LFP strategy with V2V, we design the strategy so that the conditions given by Theorem \ref{prop: LFP DT frequency-domain condition} are met with respect to the output $\mathbf{y}_i = e_{lat,i}^{des}$.  
We leverage existing literature by adopting the stabilizing feedback gains $\mathbf{K_P} = [k_{e_{lat}} \quad k_{\tilde{\theta}}]$ and $\mathbf{K_D} = [k_{\dot{e}_{lat}} \quad k_{\dot{\tilde{\theta}}}]$ from~\cite{liu2020lateral}, which were designed for the same Lincoln MKZ vehicle considered in this paper.  
The feedforward gain $k_{ff}$ is adopted from~\cite{liu2020lateral}, which ensures zero steady-state lateral error when tracking a circular arc, and is computed as $k_{ff} = a + b + \frac{m v_x^2}{a+b} \left( \tfrac{b}{C_f} - \tfrac{a}{C_r} + \tfrac{a}{C_r} k_{\tilde{\theta}} \right) - b k_{\tilde{\theta}}$.  
The learning gains $\mathbf{K_{LP}}$ and $\mathbf{K_{LD}}$ (scalars for $\mathbf{y}_i = e_{lat,i}^{des}$) are selected according to Theorem \ref{prop: LFP DT frequency-domain condition}.  
The designed numerical values of all controller gains are listed in Table~\ref{tab: controller gains design}.
\begin{table}[h]
\vspace{-0.5em}
\centering
\footnotesize
\caption{Controller gains for the designed LFP strategy}
\begin{tabular}{c|c|c|c|c|c|c}
\hline \hline
$k_{e_{lat}}$ & $k_{\tilde{\theta}}$ & $k_{\dot e_{lat}}$ & $k_{\dot{\tilde{\theta}}}$ & $k_{ff}$ & $\mathbf{K_{LP}}$ & $\mathbf{K_{LD}}$\\[1.0001ex]
\hline
0.06 & 0.96 & 0  & 0.08 & 1.59 & -0.04  & -0.3 \\
\hline \hline
\end{tabular}
\label{tab: controller gains design}
\vspace{-0.5em}
\end{table}

Figure \ref{fig: LFP DT results} presents (a) the traveled paths in the \(X\)–\(Y\) plane, (b) the lateral error \(e_{lat}^{des}\) versus arc length \(l_d\), and (c) the heading error \(\tilde{\theta}^{des}\) versus \(l_d\) for all vehicles. As seen in Fig. \ref{fig: LFP DT results}(b) and in the zoomed-in view of Fig. \ref{fig: LFP DT results}(a), the lateral error \(e_{lat}^{des}\) consistently diminishes from one vehicle to the next, providing clear visual confirmation of  lateral string stability with respect to \(\mathbf{y}_i = e_{lat,i}^{des}\). In contrast, Fig. \ref{fig: LFP DT results}(c) reveals no consistent attenuation pattern in the heading error \(\tilde{\theta}^{des}\).

\begin{figure}[h]
\vspace{-0.5em}
    \centering
    \subfigure[traveled paths in $X-Y$ plane]{\includegraphics[width=0.239\textwidth]{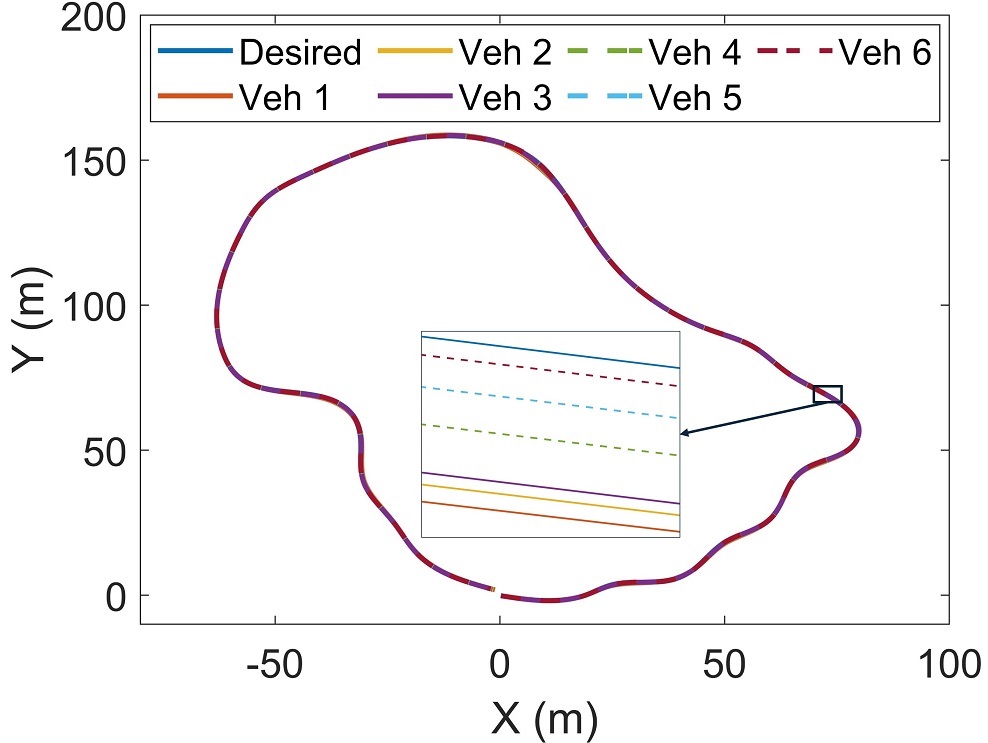}} \\
    \subfigure[lateral error v.s. arc length]{\includegraphics[width=0.239\textwidth]{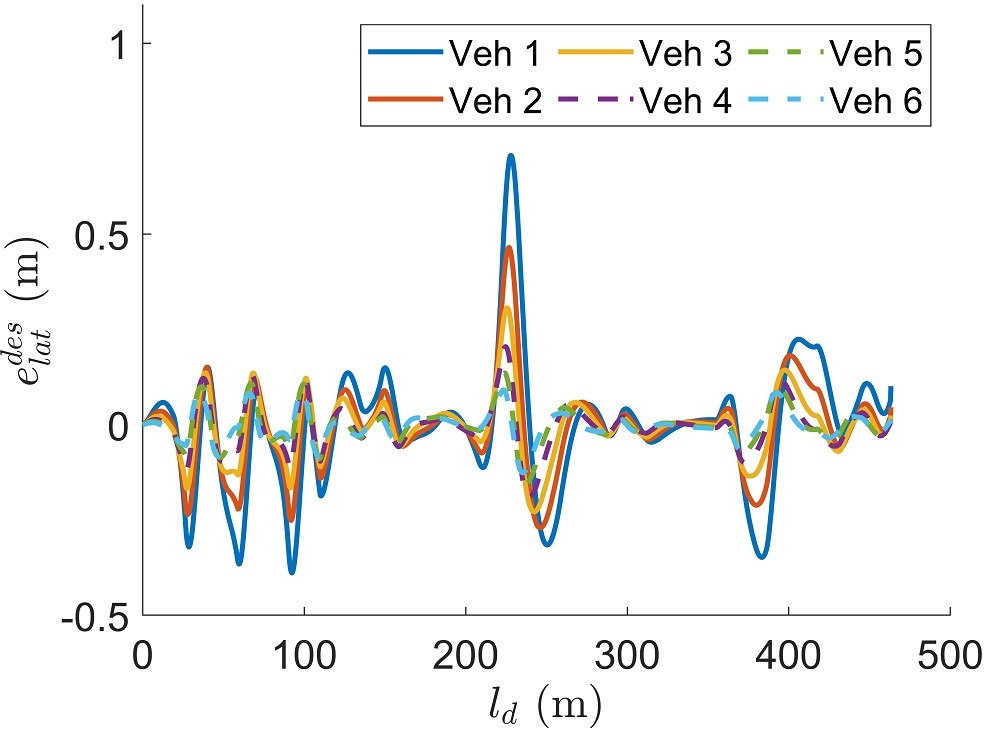}}
    \subfigure[heading error v.s. arc length]
{\includegraphics[width=0.239\textwidth]{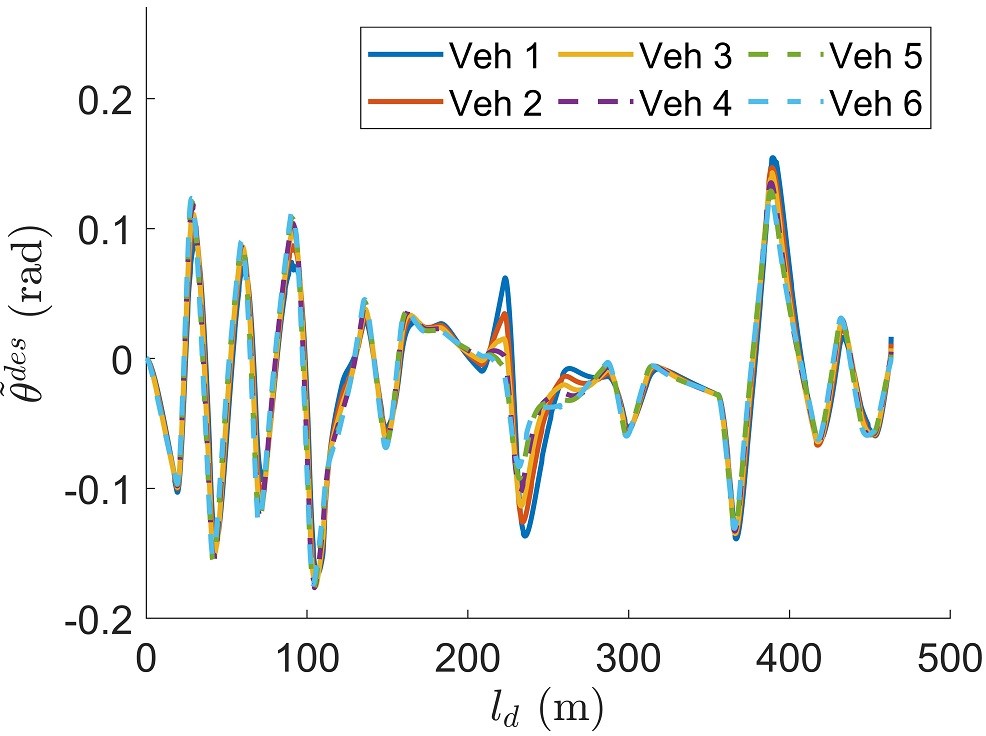}}
    \caption{Simulation results for LFP with V2V}
    \label{fig: LFP DT results}
    \end{figure}

A more quantitative comparison is given in Fig.~\ref{fig: LFP DT quantitative results}, which plots the $\mathcal{L}_2$ norms of the two outputs of interest: $e_{lat}^{des}$ in Fig.~\ref{fig: LFP DT quantitative results}(a) and $\mathbf{e}^{des}$ in Fig.~\ref{fig: LFP DT quantitative results}(b). The $\mathcal{L}_2$ norms are obtained by approximating the continuous-time integrals with discrete-time summations over the simulation horizon. To better highlight the pattern, the platoon length is extended to 12 vehicles for the quantitative results. As seen in Fig.~\ref{fig: LFP DT quantitative results}(a), the lateral error norm decreases from one vehicle to the next, confirming the  $\mathcal{L}_2$ lateral string stability of the LFP strategy with V2V with respect to $\mathbf{y}_i = e_{lat,i}^{des}$. Notably, Fig.~\ref{fig: LFP DT quantitative results}(b) shows that, although  $\mathcal{L}_2$ lateral string stability with respect to $\mathbf{y}_i = \mathbf{e}_i^{des}$ is not guaranteed for all scenarios, it still holds empirically in this particular test scenario.
\begin{figure}[h]
    \centering
    \subfigure[$\mathcal{L}_2$ norm of $e_{lat}^{des}$]{\includegraphics[width=0.235\textwidth]{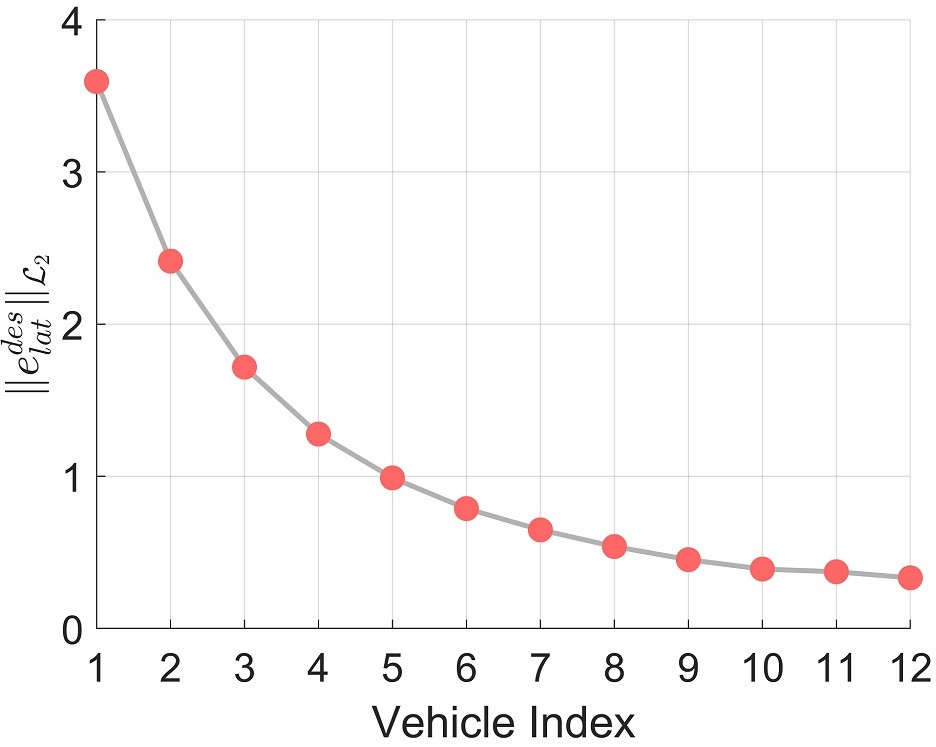}} 
    \subfigure[$\mathcal{L}_2$ norm of $\mathbf{e}^{des}$]{\includegraphics[width=0.243\textwidth]{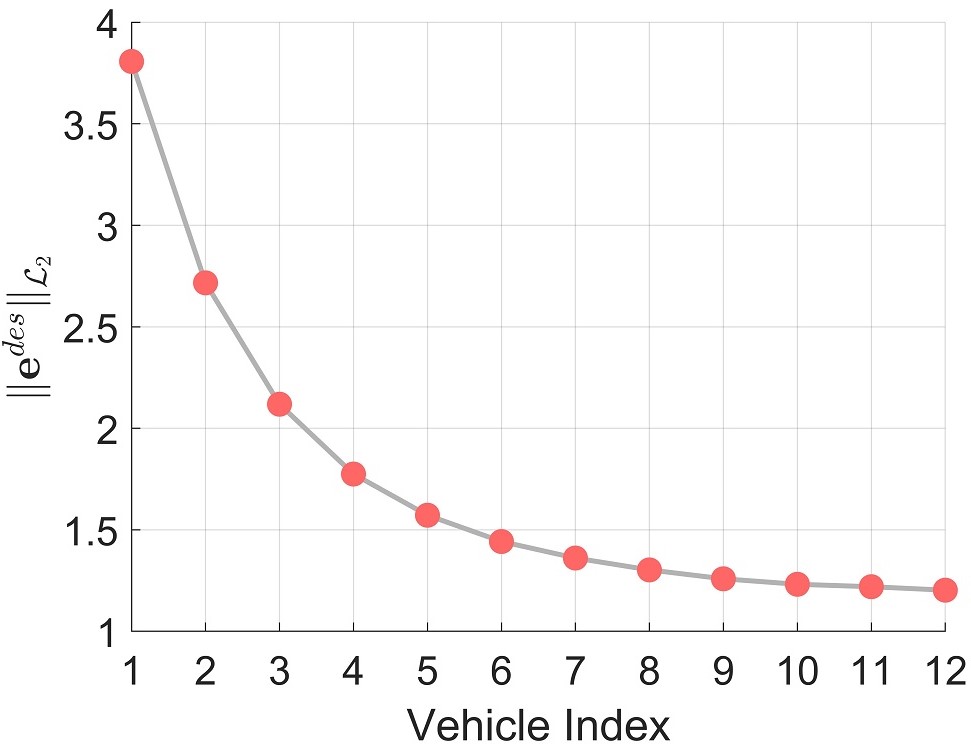}}
    \caption{Quantitative results for LFP with V2V}
    \label{fig: LFP DT quantitative results}
    \vspace{-0.5em}
    \end{figure}

\subsection{FF Strategy with Onboard Sensing}
\label{sec 6.3}
In this subsection, we validate the instability results of the FF strategy with onboard sensing. The control strategy switches from LFP to FF, while the values of the common controller gains $k_{e_{lat}}, k_{\tilde{\theta}}, k_{\dot e_{lat}}, k_{\dot{\tilde{\theta}}}$, and $k_{ff}$ are kept identical to those in Table~\ref{tab: controller gains design}. The tracking errors used for control input computation of each following vehicle are calculated relative to the recorded path traveled by its immediate predecessor.

Figure \ref{fig: FF results} shows (a) the traveled paths in the $X$–$Y$ plane, (b) the lateral error $e_{lat}^{des}$ v.s. arc length $l_d$, and (c) the heading error $\tilde{\theta}^{des}$ v.s. $l_d$. Across all subfigures, a pronounced amplification of errors along the platoon is evident.
\begin{figure}[h]
\vspace{-0.5em}
    \centering
    \subfigure[traveled paths in $X-Y$ plane]{\includegraphics[width=0.239\textwidth]{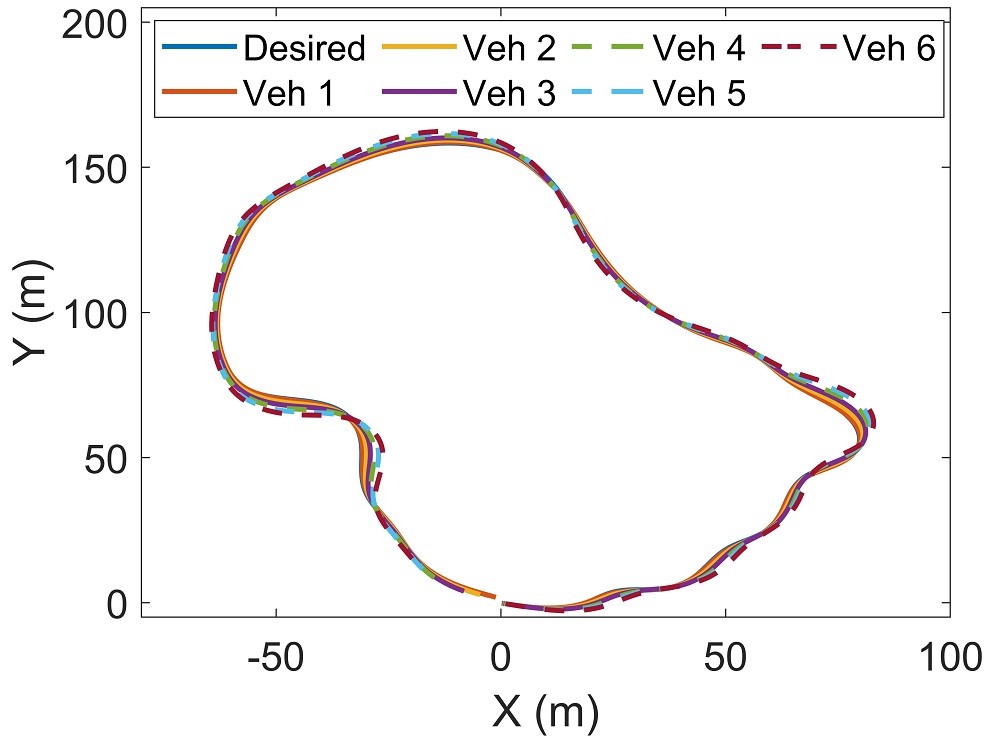}} \\
    \subfigure[lateral error v.s. arc length]{\includegraphics[width=0.239\textwidth]{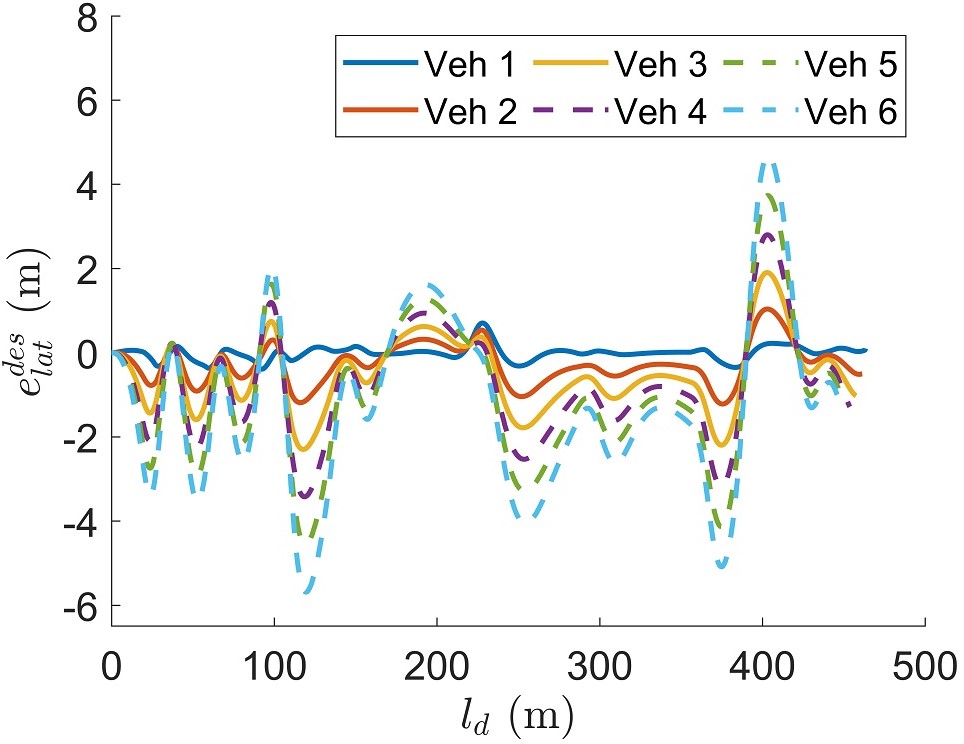}}
    \subfigure[heading error v.s. arc length]
{\includegraphics[width=0.239\textwidth]{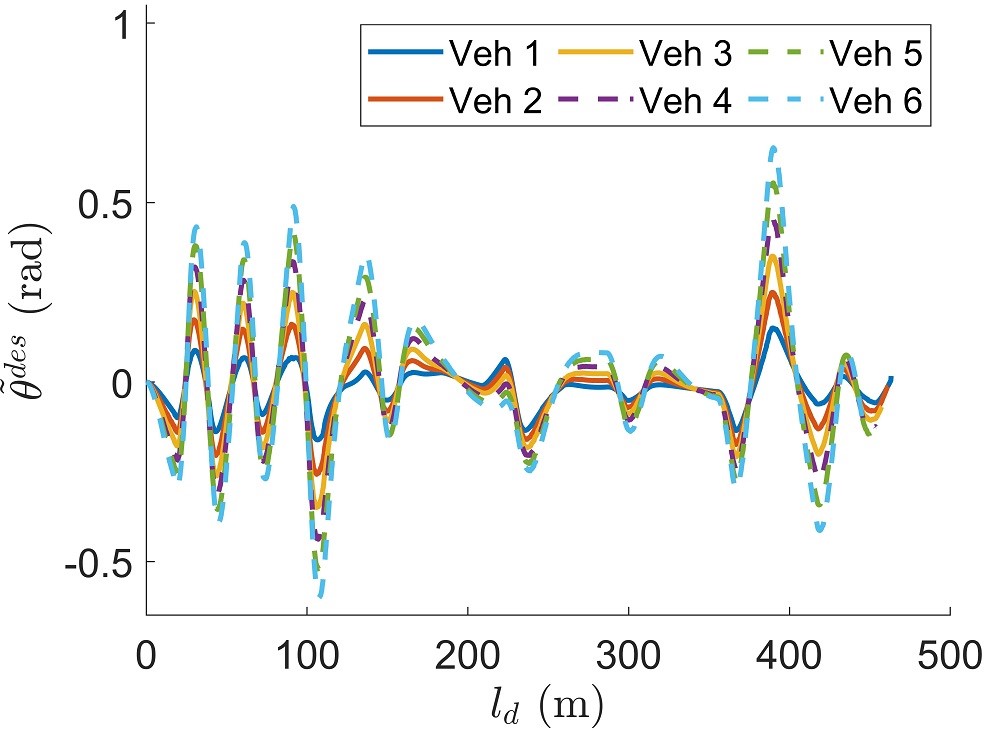}}
    \caption{Simulation results for FF with onboard sensing}
    \label{fig: FF results}
    \end{figure}

Similarly, we extend the platoon size to 12 vehicles and present quantitative results in Fig.~\ref{fig: FF quantitative results}, which plots the $\mathcal{L}_2$ norms of $e_{lat}^{des}$ and $\mathbf{e}^{des}$ in panels (a) and (b), respectively. Both norms exhibit a strict increase from one vehicle to the next, once again confirming the instability results.
\begin{figure}[h]
    \centering
    \subfigure[$\mathcal{L}_2$ norm of $e_{lat}^{des}$]{\includegraphics[width=0.239\textwidth]{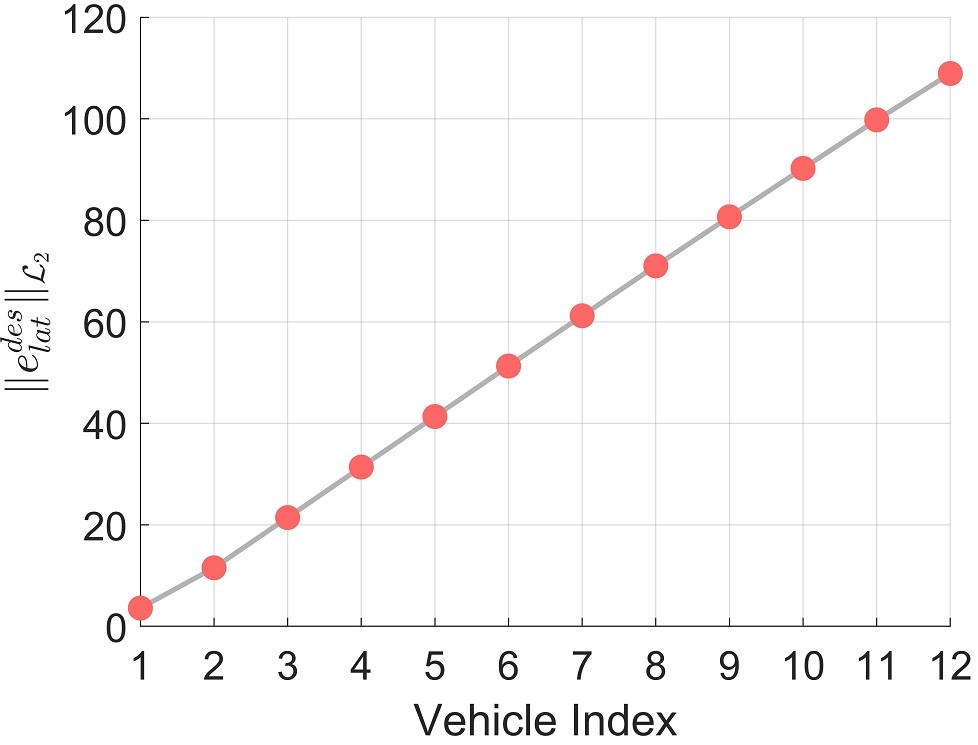}} 
    \subfigure[$\mathcal{L}_2$ norm of $\mathbf{e}^{des}$]{\includegraphics[width=0.239\textwidth]{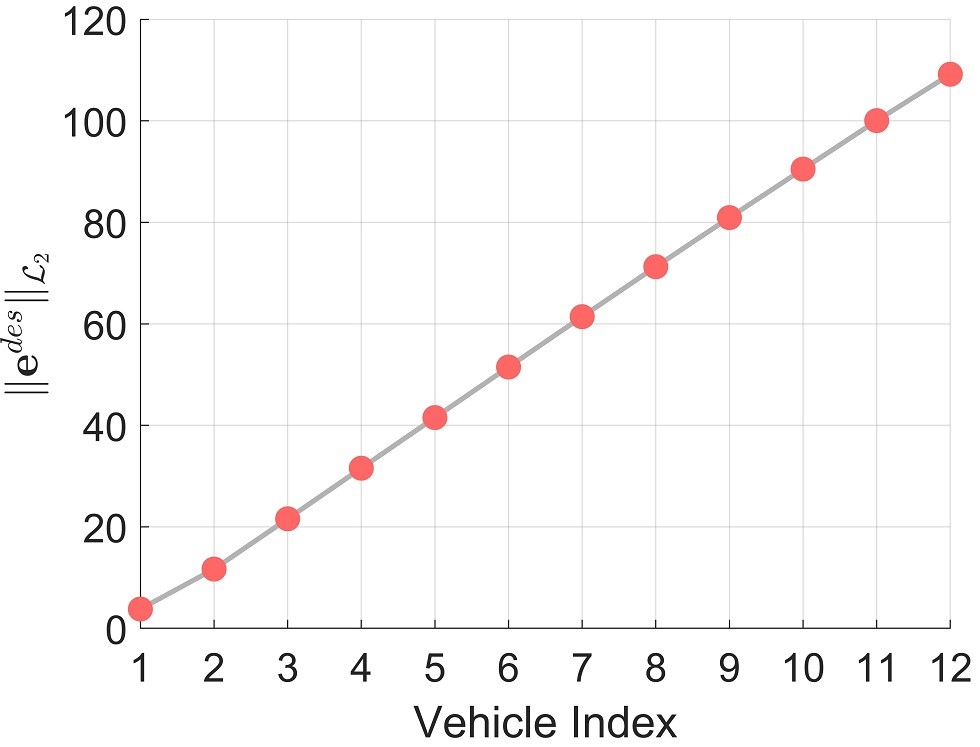}}
    \caption{Quantitative results for FF with onboard sensing}
    \label{fig: FF quantitative results}
    \vspace{-0.5em}
    \end{figure}

\section{Conclusion}
\label{sec7}
This work provided an arc-length parameterized formulation of the platoon lateral control problem and defined  $\mathcal{L}_2$ lateral string stability. Within this framework, we examined two information acquisition modes, each with a corresponding control strategy, and two choices of outputs: specifically, onboard sending vs. V2V, FF vs. LFP control, and tracking error vector vs. lateral error. We analyzed four representative combinations for  $\mathcal{L}_2$ lateral string stability.

Our analysis revealed that only one of the four combinations can achieve  $\mathcal{L}_2$ lateral string stability. In particular, the LFP strategy with V2V with output $\mathbf{y_i}=e_i^{lat}$ is the sole case that ensures error attenuation along the platoon. These results have important implications for lateral controller design in vehicle platoons, where the onboard sensors of the following vehicles can be occluded. Our findings underscore the critical role of incorporating desired path information and predecessor information into the design through V2V. The findings were validated through comprehensive numerical simulations.

\addtolength{\textheight}{-12cm}   




\bibliographystyle{IEEEtran}
\bibliography{IEEEabrv,root}

\end{document}